\DeclareMathOperator*{\argmin}{argmin}
\newcommand{\floor}[1]{\lfloor #1 \rfloor}
\newcommand{\openr}{\hbox{${\rm I\kern-.2em R}$}}
\newcommand{\pl}{\parallel}
\newtheorem{theorem}{Theorem}[section]
\newtheorem{lemma}[theorem]{Lemma}
\newtheorem{definition}{Definition}
\title{Conditional Super Learner}
\begin{document}

\author{Gilmer Valdes\textsuperscript{a}\thanks{CONTACT Gilmer Valdes. Email: Gilmer.Valdes@ucsf.edu} and Yannet Interian\textsuperscript{b} and Efstathios Gennatas\textsuperscript{a} and Mark Van der Laan \textsuperscript{c} \\
\textsuperscript{a}Department of Radiation Oncology, University of California San Francisco, San Francisco, CA; \textsuperscript{b}Master of Science in Data Science, University of San Francisco, San Francisco, CA; \textsuperscript{c} Division of Biostatistics, University of California, Berkeley, CA
}

\maketitle 

\begin{abstract}
In this article we consider the Conditional Super Learner (CSL) algorithm which selects the best model candidate from a library conditional on the covariates. The CSL expands the idea of using cross validation to select the best model and merges it with meta learning. Here we propose a specific algorithm that finds a local minimum to the problem posed, proof that it converges at a rate faster than $O_p(n^{-1/4})$ and offer extensive empirical evidence that it is an excellent candidate to substitute stacking or for the analysis of Hierarchical problems. 
\end{abstract}


\section{Introduction}

The idea of combining different models to obtain one that is better than any of its constituents (meta learning) has been explored extensively and it is currently used in many applications \cite{breiman1996stacked, leblanc1996combining, wolpert1992stacked}. Meta learning today, however, mainly consists of creating linear combinations of models (i.e stacking). Its purpose is to improve the accuracy of the individual models, albeit at the expense of interpretability. Related ideas are also explored for ensemble methods which create linear combination of simpler models. Two main ensemble methods can be highlighted: bagging and boosting \cite{breiman1996bagging,freund1996experiments}. In bagging, models are averaged to reduce the variance of individual models and improve accuracy. In boosting, simple models are sequentially learned reducing the bias of the estimator at each step \cite{friedman2001greedy}. 

Usually thought independently from meta learning, the use of cross validation to select the best algorithm from a library (either different models or different hyperparameters) is widely popular \cite{efron1983estimating}. Establishing the theoretical basis for designing an oracle algorithm that will select the best from a library of models (using cross validation), Van der Laan et al demonstrated that cross validation can be used more aggressively than previously thought, terming the cross validation selector ``super learner" \cite{van2007super}. Specifically, it was shown that if the number of candidate estimators, $K(n)$, is polynomial in sample size, then the cross validation selector is asymptotically equivalent to the oracle selector --one that knows the best algorithm \cite{van2007super}. Similarly to the empirical use of cross validation, the super learner proposes to select one model from a library for all the observations. However, for complex functions and simple models in the library (e.g to afford interpretability for instance) it is possible that the model selected to be the best in one region of the covariates might not be the best in another. 

In the present article,  we expand on this idea and investigate an algorithm that selects the best model from a library conditional on the covariates, called here Conditional Super Learner (CSL). This meta algorithm can be thought as learning in the cross validation space. With the CSL, therefore, we investigate a meta learning strategy that instead of forming a linear combination of models, it reduces the bias of the models in the library by selecting them conditional on the covariates. We show how the CSL has implications for both the accuracy of models and their interpretability. Specifically, in this article we:

\begin{enumerate}

\item Develop the theoretical foundations for the \emph{Conditional Super Learner}: An algorithm that selects the best model from a library conditional on the covariates.

\item Illustrate how the CSL is a generalized  partitioning algorithm that finds different boundary functions (not just vertical cuts as CART does) with  $\mathcal{M}$-estimators algorithms at the nodes.

\item Establish the connection between CSL and interpretability. 

\item Show empirically how CSL improves over the regular strategy of using cross validation to select the best model for all observations. 

\item Show empirically how CSL can give better $R^2$ than stacking in a set of regression problems.

\item Show empirically how CSL performs in the analysis of Hierarchical Data.

\end{enumerate}

\begin{figure}[t!]
 \includegraphics[width=0.5\textwidth]{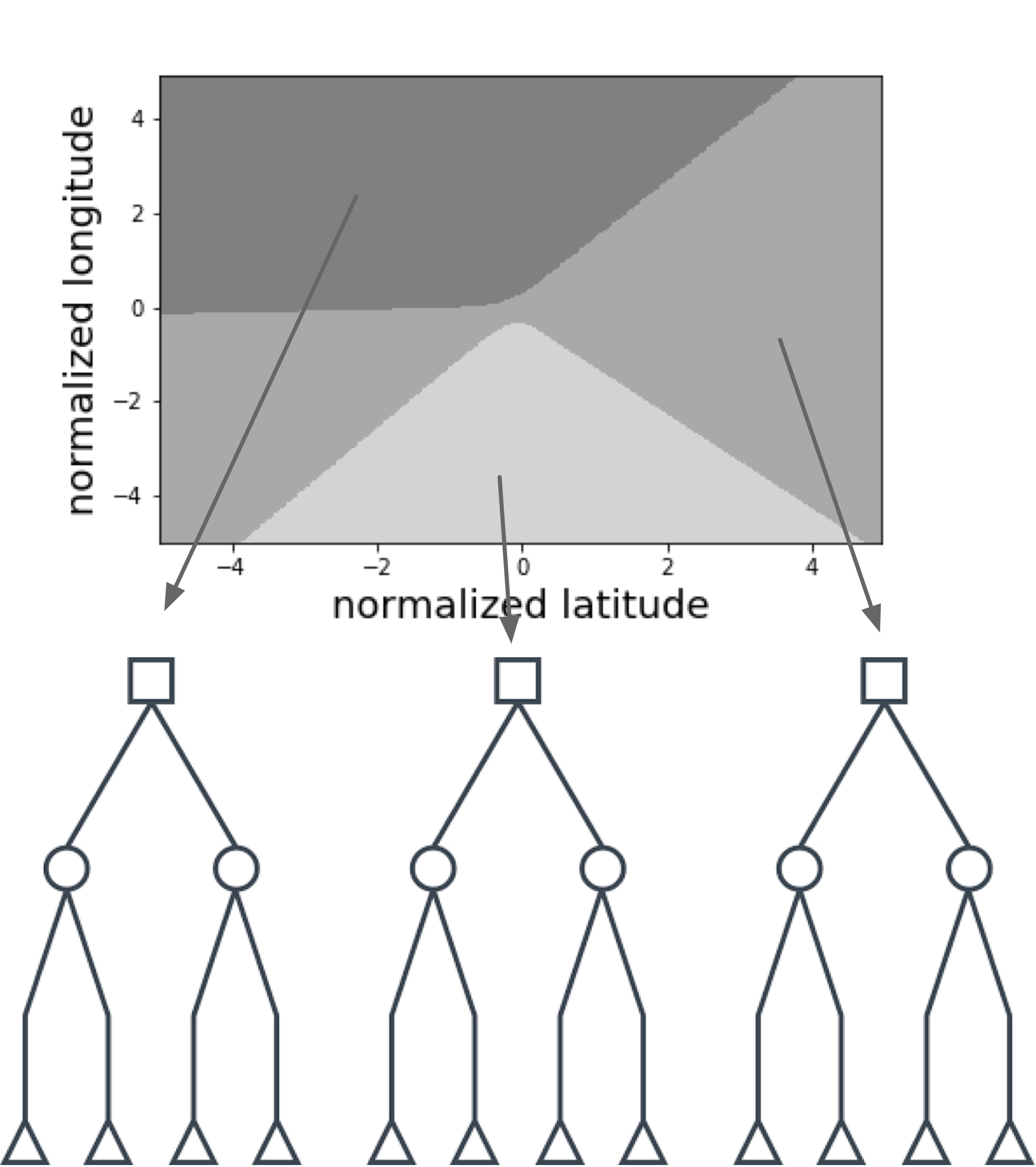}
\centering
\caption{This diagram shows an application of the CLS model. In this dataset we have 4 variables: number of bedrooms, bathrooms, latitude and longitude to predict house prices. The rectangular region shows how the oracle divides the latitude and longitude (normalized) in 3 regions. Each region has its own expert (using number of bedrooms and bathrooms), represented here by a diagram of a tree, to makes predictions.}
\label{Figure1}
\end{figure}
 
\section{Conditional Super Learner}

The algorithms that we discuss in this paper are supervised learning algorithms. The data are a finite set of paired observations $\mathcal{X}  = \{(x_i,y_i)\}_{1}^{N}$. The input vectors $x$, whose components are called here covariates, are assumed to be on $\mathbb{R}^p$, while $y$ can be either a regression or classification label. 

We propose to solve supervised learning problems by 1) dividing the input space into a set of regions that are learned by an iterative algorithm and  2) fit simple interpretable models that we call ``experts" to the data that fall in these regions. The regions are learned by fitting a multi-class classification model that we call the ``oracle" which learns which expert should be used on each region. Given an oracle $o(x)$, region $k$ is defined as $\{o(x)=k\}$, that is, the set of points for with the oracle predict to use the function $f_k(x)$. 

An example of an application of Conditional Super Learner (CSL) is shown in Figure \ref{Figure1}. In here we have 4 variables to predict houses prices: bedrooms,	bathrooms, latitude, longitude. The rectangular region shows how the oracle divides the latitude and longitude (normalized) in 3 regions.
Each region has its own expert, represented here by a diagram of a tree, to make predictions. Each of the experts has as input the 4 variables. 

As with any meta-learning algorithm, the Conditional Super Learner algorithm for learning the oracle $o(x)$ (given the fits of the $K$ experts) will be applied to a cross-validated data set, using $V$-fold cross validation. That is, for each $Y_i$ falling in one of the $V$ validation samples, we have a corresponding training sample. We  couple each observation $Y_i$  with $K$ expert algorithms trained on subsets (from current best estimate of oracle) of its corresponding training data set, thereby creating a cross-validated data set of $N$ observations. In this section, for the sake of explaining the conditional super-learner algorithm, this formality of cross validation will be suppressed, but in our theoretical section we make the full conditional super-learning algorithm formal. 

\subsection{Definition of CSL}

Given an oracle $o(x)$ and $K$ experts models $\{F_{k}(x)\}_{k=1}^{K}$ fitted on each of the corresponding regions $\{o(x)=k\}$, the CSL can be defined as:

\begin{equation}
\label{eqn1}
CSL(x) = \sum_{k=1}^{K} \mathbbm{1}\{o(x) = k\}F_k(x)
\end{equation}

where $o(x) \in \{1,2,..K\}$. $CSL(x)$ is the Conditional Super Learner that outputs the best model $F_{k}(x)$ from a library of $K$ models conditional on the covariate $x$. The idea is to find the $o(x)$ and  corresponding fits $\{F_{k}(x)\}_1^K$ that minimize a given loss function over the training data:

\begin{equation}
\label{eqn2}
\argmin_{o,\{F_{k}\}_1^K} \sum_{i=1}^{N} L\big(y_i,\sum_{k=1}^{K} \mathbbm{1}\{o(x_i)=k\} F_{k}(x_i)\big)
\end{equation}

\subsection{Fitting the oracle}

To find the solution to equation \ref{eqn2} we will employ a trick often used in machine learning. We will iterate between solving $o(x)$ and solving for $\{F_{k}(x)\}_1^K$. To solve for $o(x)$ we will assume that all $\{F_{k}(x)\}_1^K$ are known, the library, and that we also have  unbiased estimations (i.e., cross-validated) of the loss at each training point $L(y_i, F_k(x_i)) $. In this case, $CSL(x)$ will aim to find the best $o(x)$ that minimizes the loss function over the training data

\begin{equation}
\label{eqn3}
\argmin_{o(x)} \sum_{i=1}^{N} L\left(y_i ,\sum_{k=1}^{K} \mathbbm{1}\{o(x_i)=k\} F_{k}\left(x_i\right)\right)
\end{equation}

and using the definition of the indicator function, we can take the sum outside of the loss function and get Equation \ref{eqn4}: 

\begin{equation}
\label{eqn4}
\argmin_{o(x)} \sum_{i=1}^{N} \sum_{k=1}^{K} \mathbbm{1}\{o(x_i)=k\} L(y_i , F_{k}(x_i))  
\end{equation}

To introduce how we fit the oracle, we define a new dataset that we called ``extended" dataset. This is a dataset for a multi-class classification problem with $K$ classes, each class corresponding to one of the expert models. This dataset has $K\cdot N$ observations -- each $x_i$ appears $K$ times with corresponding labels $\{1, \dots K\}$ and a specific weight.\\

\begin{definition}
Given dataset  $\mathcal{X} = \{(x_i,y_i)\}_{1}^{N}$, expert functions $\mathcal{F}=\{F_{k}\}_1^K$ we define the {\bf extended dataset} $\mathcal{X} _{\mathcal{F}} = \{(\tilde{x}_i,z_i,w_i)\}_{1}^{K\cdot N}$ with $K\cdot N$ observations where:
\begin{itemize}
    \item $\tilde{x}_i = x_{\floor{i/K} + 1}$   
    \item $z_i = i \mod K + 1$  
    \item $w_i$ is a weight on observation $(\tilde{x}_i,z_i)$ and is defined in the following way: 
\end{itemize}

Let $l_i$ be $K$ dimensional vector $l_i =(L(y_i, F_{1}(x_i)), \dots,L(y_i, F_{K}(x_i))$ where element $k$ is the loss of expert $k$ at point $(x_i, y_i)$. Let  $ONE_K$ a $K \times K$ matrix of all ones and $DIAG_K$  a $K \times K$ matrix with ones in the diagonal and zeros everywhere else. 

\begin{equation}
(w_{iK +1}, \dots w_{iK + K})^T =  [ONE_K - DIAG_K]^{-1} l_i^T
\label{eqn5}
\end{equation}
\end{definition}

\begin{lemma}
Solving problem \ref{eqn4} is equivalent to finding the oracle $o(x)$ that minimizes the weighted miss classification error of a extended dataset $\mathcal{X} _{\mathcal{F}}$:

\begin{equation}
\argmin_{o(x)}  \sum_{i=1}^{N\cdot K} w_i \mathbbm{1}\{o(\tilde{x_i}) \neq z_i \} 
\label{eqn6}
\end{equation}
\label{lemma1}
\end{lemma}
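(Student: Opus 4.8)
The plan is to show that the two optimization problems, Equation \ref{eqn4} and Equation \ref{eqn6}, decompose into sums of per-point contributions, and that for every oracle $o$ the contribution of each original observation is \emph{identical} in the two objectives. Since both problems take the argmin over the same class of oracles $o:\mathbb{R}^p \to \{1,\dots,K\}$, matching the objectives as functions of $o$ immediately forces the two argmin sets to coincide, which is exactly the claimed equivalence.

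First I would rewrite the loss-minimization objective \ref{eqn4}. For a fixed oracle $o$, the inner sum $\sum_{k=1}^{K}\mathbbm{1}\{o(x_i)=k\}L(y_i,F_k(x_i))$ collapses, by the indicator, to the single surviving term $L(y_i,F_{o(x_i)}(x_i)) = l_{i,o(x_i)}$, i.e. the $o(x_i)$-th entry of the loss vector $l_i$. Hence \ref{eqn4} equals $\argmin_o \sum_{i=1}^{N} l_{i,o(x_i)}$. Next I would rewrite the weighted misclassification objective \ref{eqn6} in the same style. By the extended-dataset definition, the $K\cdot N$ rows fall into $N$ consecutive blocks of size $K$, one block per original point $x_i$, and within block $i$ the rows carry the $K$ distinct labels $z=1,\dots,K$ with weights $w_{i,1},\dots,w_{i,K}$. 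For an oracle with $o(x_i)=k^\ast$, block $i$ therefore contributes $\sum_{z=1}^{K} w_{i,z}\,\mathbbm{1}\{k^\ast\neq z\} = \sum_{z\neq k^\ast} w_{i,z}$.

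The heart of the argument is to recognize this block contribution as a linear image of the weight vector that the definition of $w_i$ is designed to invert. The quantity $\sum_{z\neq k^\ast} w_{i,z}$ is precisely the $k^\ast$-th coordinate of $(ONE_K - DIAG_K)\,w_i$, since $ONE_K - DIAG_K$ has zeros on its diagonal and ones off it, so each of its rows sums all entries of $w_i$ except the diagonal one. Now invoking Equation \ref{eqn5}, namely $w_i = (ONE_K - DIAG_K)^{-1} l_i$, applying $ONE_K - DIAG_K$ gives $(ONE_K - DIAG_K)\,w_i = l_i$. Reading off the $k^\ast$-th coordinate yields $\sum_{z\neq k^\ast} w_{i,z} = l_{i,k^\ast}$. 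Thus the contribution of block $i$ to \ref{eqn6} equals the contribution of point $i$ to \ref{eqn4} for every decision $k^\ast=o(x_i)$; summing over $i$ shows the two objectives are the same function of $o$, and so share minimizers.

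The one step that genuinely needs care — and the only place the argument could break — is the invertibility of $ONE_K - DIAG_K$ that Equation \ref{eqn5} presupposes, so I would justify it explicitly. A short spectral computation suffices: $ONE_K$ has eigenvalue $K$ (simple, with eigenvector the all-ones vector) and eigenvalue $0$ with multiplicity $K-1$, so $ONE_K - DIAG_K$ has eigenvalues $K-1$ and $-1$, neither of which vanishes for $K\geq 2$. Hence the inverse exists and the weights $w_i$ are well defined. The degenerate case $K=1$ is trivial, as a single expert makes the oracle constant. No other step presents a real obstacle; once the block decomposition and the linear-algebra identity are in place the equivalence is immediate.
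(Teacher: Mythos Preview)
Your proof is correct and follows essentially the same route as the paper: both arguments reduce to the per-point identity $\sum_{z\neq k} w_{i,z}=l_{i,k}$, i.e.\ $(ONE_K-DIAG_K)\,w_i=l_i$, with the only cosmetic difference that the paper \emph{derives} the weight formula from this identity while you \emph{verify} it from the given definition. Your explicit spectral check that $ONE_K-DIAG_K$ is invertible for $K\geq 2$ is a welcome addition that the paper leaves implicit.
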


\begin{proof}

First note that a missclassification loss for a multi-class classification problem can be written as $L(y, f(x)) = \mathbbm{1}\{f(x) \neq y\}$. We want to write Equation \ref{eqn4} as a missclassification loss of a classification problem. For each observation $(x_i, y_i)$ consider the weighted dataset $\{(x_i, 1, w_1), (x_i, 2, w_2), \dots (x_i, K, w_K) \}$ with missclassification loss $\sum_{k=1}^{K} w_k \mathbbm{1}\{o(x_i) \neq k\}$. That is, we want to find for each observation $(x_i, y_i)$ weights $(w_1, \dots, w_K)$ such that:

\begin{equation}
\sum_{k=1}^{K} \mathbbm{1}\{o(x_i)=k\} L(y_i , F_{k}(x_i))  = \sum_{k=1}^{K} w_k \mathbbm{1}\{o(x_i) \neq k\}
\label{eqn7}
\end{equation}

Since $o(x_i)$ can just have values in $\{1, \dots, K\}$ we can consider all the options. For example, if $o(x_i) = k$, the equality in Equation \ref{eqn7} becomes
$L(y_i, F_k(x_i)) = \sum_{j=1}^K w_j - w_k$. If we consider all possible values for $o(x_i)$ we get the following set of equations:

\begin{align*}
L(y_i, F_1(x_i)) & = \sum_{j=1}^K w_j - w_1\\
&\vdots\\
L(y_i, F_K(x_i)) & = \sum_{j=1}^K w_j - w_K
\end{align*}

The previous equation can be written in matrix form $l_i^T = [ONE_K - DIAG_K] (w_1, \dots w_K)^T$. Which gives us:

$$(w_1, \dots, w_K)^T= [ONE_K - DIAG_K]^{-1} l_i^T$$

\end{proof}

As a result of Lemma \ref{lemma1}, $o(x)$ is the solution of a multi-class classification problem on the extended dataset  $\mathcal{X} _{\mathcal{F}}$. We approximate o(x) by fitting any standard classification algorithm on $\mathcal{X} _{\mathcal{F}}$.

\subsection{Fitting the experts}
 
 Similarly to the previous section, in order to fit the experts we assume that $o(x)$ is known. Then Equation \ref{eqn2} becomes $K$ independent classification/ regression problems that minimize the empirical loss over observations $\{i:o(i)=k\}$, for each $k=1,\ldots,K$, which is generally already  solved by standard machine learning algorithms.

\begin{equation}
\label{eqn8}
\argmin_{F_k} \sum_{x_i:o(x_i) = k}  L(y_i ,F_k(x_i))
\end{equation}

\subsection{A two step algorithm}
 Finding  $\{F_k(x)\}_1^K$ indicates that  equation \ref{eqn2} can be minimized iteratively. Following the K-mean algorithm's philosophy, let us propose the minimization of equation \ref{eqn2} in two steps: one to fit the oracle and the other to fit the experts. Please note that if we take into consideration that at every time that  each step is applied the Loss function  decreases, the convergence to a local minimum is guaranteed. Of course, this is only true if we use, for each observation, the estimation of $(y_i ,F_k(\bf{x}_i)$ on the training data. This, however, will most likely result in overfitting. After this discussion we are ready to write  Conditional Super Learner pseudo code (see above). 

\begin{algorithm}[t]
\SetAlgoLined
\LinesNotNumbered
\KwIn{
    $ \mathcal{X} = \{ (x_i, y_i)\}_1^N$; $\mathcal{F} = (F_1, F_2, \dots, F_K)$ 
}
Initialize: for each sample split $v=1,\ldots,V$, fit the experts $\mathcal{F} = (F_1, F_2, \dots, F_K)$  on initial subsets of the $v$-th training data set. For each $i$, let $F_{k,-i}$ be the $k$-th expert trained on training sample that excludes $Y_i$. Construct the corresponding cross-validated data set $(Y_i,F_{1,-i}(x_i),\ldots,F_{K,-i}(x_i))$, $i=1,\ldots,N$. 

\For{$t=1$ : iterations}{
 For each point and each expert compute: $L(y_i ,F_{k,-i}(x_i))$\\
 Create extended dataset $\mathcal{X}_{\mathcal{F}}$\\
 Fit $o(x)$ on $\mathcal{X}_{\mathcal{F}}$\\
 Re-fit each expert $F_k$ on $\{o(x)=k\}$ for the $V$-training samples. \\
}
Based on final $o(x)$, refit each expert $F_k$ on $\{o(x)=k\}$ for total sample.
\KwResult{$\sum_{k=1}^{K} \mathbbm{1}\{o(x)=k\}F_{k}(x)$}
\caption{{Conditional Super Learner \textbf{(CSL)}}}
\label{algo:duplicate}
\end{algorithm}

\section{Theoretical Guarantees}

In this section we will formalize meta-learning in general and the Conditional Super Learner in particular. Specifically we will: 

\begin{enumerate}

\item Formalize meta-learning, cross validation and the Conditional Super Learner.

\item Prove a rate of convergence theorem  for a general Super Learner that uses cross validation .

\item Highlight the bias variance trade-off for different meta-learning algorithms.

\item Give practical recommendations on how to best use aggressive meta-learning algorithms. 

\item Connect meta-learning with a double super-learner.

\end{enumerate}

Due to the complexity of the notation this section will be self contained. The readers interested in a more practical use of the CSL can skip this section.

\subsection{Formalizing meta learning}

Let us start by specifying the data generating process, the type of candidate estimators considered, the loss function, cross validation and finally meta-learning. This section contains all necessary assumptions and definitions before we prove formal results for the performance of the Super Learner, in particular, w.r.t. its oracle choice.   

\subsubsection{Data Generating Process}
Let $O_1,\ldots,O_n$ be $n$ i.i.d. copies of a random variable $O\sim P_0$ with data distribution $P_0$ that is known to be an element of a specified statistical model ${\cal M}$. In a regression or classification application we have $O=(X,Y)$ for a covariate vector $X$ 	and outcome  $Y$, where $Y$ could be categorial, ordered discrete, or continuous. Let $O\in [0,\tau_o]\subset \openr^{d_1}$ be a Euclidean valued random variable with realizations in a cube $[0,\tau_o]$. Let $P_n$ be the empirical probability measure of $O_1,\ldots,O_n$. Let $\Psi:{\cal M}\rightarrow {\bf \Psi}=\Psi({\cal M})$ be a particular functional parameter of $P_0$.  In our setting $\Psi(P_0)$ could be a  conditional mean $E_0(Y\mid X)$ (regression) or a conditional probability distribution $P(Y=\cdot\mid X)$ (classification).

\subsubsection{Restriction on the type of functional parameters considered}
Each possible value  $\psi$ of this functional parameter is a $d$-variate real valued function $x\rightarrow\psi(x)$, where we assume that $\psi:[0,\tau]\subset\openr^d\rightarrow \openr$ is defined on a $d$-dimensional cube $[0,\tau]$. We  assume that the parameter space ${\bf \Psi}\subset D[0,\tau]$ is contained in the space of multivariate real valued cadlag functions that are assumed to be right-continuous with left-hand limits \cite{neuhaus1971weak}. Another possible assumption we emphasize is that the so called sectional variation norm of each $\psi\in {\bf \Psi} $ is bounded \cite{gill1995inefficient, van2017generally}.

To define the sectional variation norm of a function $\psi\in D[0,\tau]$, for each subset $s\subset\{1,\ldots,d\}$, we define its $s$-specific section $\psi_s(x)=\psi(x_s,0_{-s})$, where for a vector $x$ $x_s=(x(j):j\in s)$ and $x_{-s}=(x(j):j\not \in s)$.
For each section $\psi_s$ we can compute its variation norm as $\int_{(0_s,\tau_s]}\mid  \psi_s(du)\mid$ which can be represented as a limit over a partitioning in $\mid s\mid$-dimensional cubes (left-open, right-closed) of the sum over the cubes in the partitioning of the absolute value of the measure $\psi_s$ assigns to this cube. Here we are reminded that the measure $f((a,b])$ a function $f$ assigns to a cube $(a,b]$ is defined as a generalized difference over the $2^{\mid s\mid}$ corners of $(a,b]$ where $\mid s\mid$ denotes the size of the set $s$: for example, for $d=2$ $f(((a_1,a_2),(b_1,b_2)])=f(b_1,b_2)-f(a_1,b_2)-f(b_1,a_2)+f(a_1,a_2)$.
We write $\pl \psi_s\pl_v =\int_{(0_s,\tau_s]}\mid \psi_s(du)\mid$. 
The sectional variation norm of $\psi$ is defined as the sum over all the sections of the section-specific variation norm:
\[
\parallel \psi\parallel_v^*=\mid \psi(0)\mid+\sum_{s\subset\{1,\ldots,d\}} \int_{(0_s,\tau_s]} \mid \psi_s(du)\mid .\]
We also note that for any cadlag function $\psi$ with finite sectional variation norm $\pl \psi\pl_v^*$ we have the following representation \cite{van2006oracle, van2017generally}:
\begin{eqnarray*}
\psi(x)&=&\psi(0)+\sum_{s\subset\{1,\ldots,d\}} \int_{(0_s,x_s]} \psi_s(du) \\
&=& \psi(0)+\sum_{s\subset\{1,\ldots,d\}} \int \phi_{s,u}(x) \psi_s(du),\end{eqnarray*}
where $\phi_{s,u}(x)=I(x_s\geq u)$ is the tensor product of indicators $I(x_j\geq u_j)$ over $j\in s$ defined by knot point $u=(u(j):j\in s)$.
For simplicity, we will assume that there is a universal bound $\sup_{\psi\in {\bf \Psi}}\pl \psi\pl_v^*<\infty$ on the sectional variation norm over the parameter space, even though this assumption can be prevented by considering a sieve ${\bf \Psi}_{c_j}=\{\psi\in {\bf \Psi}:\pl \psi\pl_v^*<c_j\}$ where the variation norm bound $c_j$ is an increasing sequence converging to infinity, and selecting $j$ with cross validation.

\subsubsection{Loss Function}

Our goal is estimation of $\Psi(P_0)$.  Let $L(\psi)(O)$ be a loss function for $\psi_0=\Psi(P_0)$ so that
$\psi_0=\arg\min_{\psi\in {\bf \Psi}}P_0 L(\psi)$ minimizes the expectation of the loss (i.e., risk), where we use notation $Pf\equiv \int f(o)dP(o)$. Assume $M_1\equiv \sup_{o,\psi}\mid L(\psi)-L(\psi_0)(o)\mid $ and $M_2=\sup_{\psi}P_0\{L(\psi)-L(\psi_0)\}^2/P_0(L(\psi)-L(\psi_0))$ are both finite, where the suprema in these definitions are over $\psi\in \Psi({\cal M})$ and over  $o$  in a support of $P_0$.
The bounds $M_1,M_2$ guarantee that the cross validation selector is well behaved and satisfies oracle inequalities \cite{van2003unified, van2006oracle,van2007super}

We also assume that $L({\bf \Psi})=\{L(\psi):\psi\in {\bf \Psi}\}$ is contained in set of cadlag functions $D[0,\tau_o]$ and that $\sup_{\psi\in {\bf \Psi}} \pl L(\psi)\pl_v^*<\infty$.
In other words, for any $\psi$ $L(\psi)$ is contained in class of cadlag functions of $O$ with a universal bound on the sectional variation norm. This is a weak regularity assumption on the loss function $L()$ since ${\bf \Psi}$ is itself also contained in the class of cadlag functions with a universal bound on the sectional variation norm. The loss-function also implies a loss-based dissimilarity 
\[
d_0(\psi,\psi_0)\equiv P_0L(\psi)-P_0L(\psi_0),\]
which will behave as a square of an $L^2(P_0)$-norm, due to $M_2<\infty$.

\subsubsection{Library of candidate estimators, cross validation , oracle selector and meta-learning}

An estimator can be represented as a mapping from the empirical measure $P_n$ into the parameter space. We let ${\cal M}_{np}$ be the set of al possible realizations of an empirical probability measure, including its limits as $n$ converges to infinity (i.e., a nonparametric statistical model), so that an estimator $\hat{\Psi}:{\cal M}_{np}\rightarrow {\bf \Psi}$.
We start out with a set $\hat{\Psi}_j:{\cal M}_{NP}\rightarrow {\bf \Psi}$ of $J$ candidate estimators, $j=1,\ldots,J$.

{\bf $V$-fold cross validation:} We define a $V$-fold cross validation scheme that maps an empirical probability measure $P_n$ into  an empirical measure of a training sample $P_{n,v}$ and corresponding empirical measure of the validation sample $P_{n,v}^1$,  across $v$-specific sample splits, $v=1,\ldots,V$. 

{\bf Meta-learning model:}
Let ${\cal F}$ be a set of functions $f:\openr^J\rightarrow \openr$, so that  $\psi_{f,n}=\hat{\Psi}_f(P_n)\equiv f(\hat{\Psi}_j(P_n):j=1,\ldots,J)$ represents a candidate estimator of $\psi_0$ that combines the $J$-candidate estimators, where, as a function of $x$, it is evaluated as $\psi_{f,n}(x)=f(\psi_{jn}(x):j=1,\ldots,J)$. One would restrict $f$ to be such that $f(\psi_1,\ldots,\psi_J)\in {\bf \Psi}$ for any $(\psi_1,\ldots,\psi_J)\in {\bf \Psi}^J$. This set of functions ${\cal F}$ defines a class of candidate estimators $(\hat{\Psi}_f: f\in {\cal F})$, where each choice is a particular ensemble from the $J$  estimators in the library.

{\bf Cross validated risk for candidate estimator $\hat{\Psi}_f$:}
For any $f$-specific candidate estimator $\hat{\Psi}_f$ we can evaluate its performance by its cross-validated risk:
\[
CV(\hat{\Psi}_f,P_n)=\frac{1}{V}\sum_{v=1}^V P_{n,v}^1 L(\hat{\Psi}_f(P_{n,v})),\]
i.e. the average across sample splits of the empirical mean over validation sample of the losses of the candidate estimator based on training sample.

{\bf{${\cal F}$-cross validation selector:}}
Let
\[
f_n=\arg\min_{f\in {\cal F}} CV(\hat{\Psi}_f,P_n)\]
be the Super Learner defined by minimizing the cross-validated risk of $\hat{\Psi}_f$ over all $f\in {\cal F}$.
A simple variation of this is that $f_n$ is only an approximation of this minimum obtained by using some kind of iterative greedy algorithm.
We also refer to $f_n$ as the ${\cal F}$-specific cross validation selector.

{\bf ${\cal F}$-super-learner:}
The ${\cal F}$-super-learner $\hat{\Psi}_{\cal F}(P_n)$ is now defined by the corresponding estimator applied to the whole sample:
\[
\hat{\Psi}_{\cal F}(P_n)\equiv \hat{\Psi}_{f_n}(P_n),\]
or, the immediate available (not requiring rerunning $\hat{\Psi}_{f_n}$ on whole sample after having determined the cross validation selector $f_n$): 
\[
\hat{\Psi}_{\cal F}(P_n)=\frac{1}{V}\sum_{v=1}^V \hat{\Psi}_{f_n}(P_{n,v}).\]
The latter choice assumes that ${\bf \Psi}$ is convex so that a simple average represents indeed a sensible estimator (and, assuming the loss-function is convex, it will satisfy the same oracle inequalities as presented below).  For the sake of theoretical analysis of the ${\cal F}$-specific super-learner, we assume that $f_n$ is the actual minimizer over ${\cal F}$ of the cross-validated risk .

{\bf ${\cal F}$-oracle selector:}
The cross validation selector $f_n$ is aiming to estimate the oracle selector
\begin{eqnarray*}
f_{0n}&=&\arg\min_{f\in {\cal F}} \frac{1}{V}\sum_{v=1}^V P_0 L(\hat{\Psi}_f(P_{n,v}))\\
&=&\arg\min_{f\in {\cal F}} \frac{1}{V}\sum_{v=1}^V d_0(\hat{\Psi}_f(P_{n,v}),\psi_0).
\end{eqnarray*}
That is, $f_{0n}$ selects the choice of candidate estimator among all $\{\hat{\Psi}_f:f\in {\cal F}\}$ whose realization on training samples $P_{n,v}$ is closest to the true function $\psi_0$ w.r.t. loss-based dissimilarity. Therefore, we refer to $f_{0n}$ as an oracle selector. Note that both $f_n$ and $f_{0n}$ are indexed by the choice of Super Learner model ${\cal F}$, so that we could also use notation $f_{n,\cal F}$ and $f_{0n,{\cal F}}$ as well.

\subsubsection{Rate of convergence}

After above definitions, in this section we will show that the Super Learner $f_n$ will converge to the oracle choice $f_{0n}$  at a rate at least faster than $O_p(n^{-1/4})$ being $n$ the number of observations. For a given class of functions ${\cal G}\subset D[0,\tau_o]$ of function of $O$, and probability measure $Q$, let  $N(\epsilon,{\cal G},L^2(Q))$ denote its covering number, defined as the minimal number of balls with radius $\epsilon$ needed to cover ${\cal G}$ in the Hilbert space $L^2(Q)$. 

\begin{theorem}
Consider the class of functions
 ${\cal G}=\{L(\hat{\Psi}_f(P_{n,v})): f\in {\cal F}\}$ and let $\alpha=\alpha({\cal F})$ be such that $\sqrt{\sup_Q\log N(\epsilon,{\cal G},L^2(Q))}\ll \epsilon^{-(1-\alpha)}$. We have that $\alpha({\cal F})\geq \alpha(d_1)$, where $\alpha(d_1)=2/(2+d_1)$.
 Then,
\[0\leq \frac{1}{V}\sum_v P_0L(\hat{\Psi}_{f_n}(P_{n,v}))-P_0L(\hat{\Psi}_{f_{0n}}(P_{n,v}))=O_P(r(n)),\]
where $r(n)=O_P(n^{-1/2})$. If, analogue to our condition that $M_2<\infty$, we assume that for a universal $M_{2,cv}$
\begin{equation}\label{M2cv}
\frac{\frac{1}{V}\sum_v P_0\left\{ L(\hat{\Psi}_{f_n}(P_{n,v}))-L(\hat{\Psi}_{f_{0n}}(P_{n,v})) \right\}^2}{\frac{1}{V}\sum_v \{P_0L(\hat{\Psi}_{f_n}(P_{n,v})-P_0L(\hat{\Psi}_{f_{0n}}(P_{n,v}) )\}}\leq M_{2,cv}
\end{equation}

then, $r(n)=O_P(n^{-1/2-\alpha({\cal F})})$.

\end{theorem}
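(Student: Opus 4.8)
The plan is to adapt the van der Laan--Dudoit--van der Vaart oracle-inequality argument for cross validation and then sharpen it with a localization (peeling) step that uses the variance bound (\ref{M2cv}). Write $\bar L(\psi)\equiv L(\psi)-L(\psi_0)$ for the centered loss and abbreviate the averaged dissimilarity of a candidate by $\tilde d_0(f)\equiv \frac{1}{V}\sum_v P_0\bar L(\hat\Psi_f(P_{n,v}))=\frac{1}{V}\sum_v d_0(\hat\Psi_f(P_{n,v}),\psi_0)$, so the object to be bounded is the regret $\delta_n\equiv \tilde d_0(f_n)-\tilde d_0(f_{0n})\geq 0$. First I would record the basic inequality: for every $f$, adding and subtracting $P_0$ inside each validation risk gives $\frac{1}{V}\sum_v P_{n,v}^1\bar L(\hat\Psi_f(P_{n,v}))=\tilde d_0(f)+\frac{1}{V}\sum_v (P_{n,v}^1-P_0)\bar L(\hat\Psi_f(P_{n,v}))$. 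Since $f_n$ minimizes the cross-validated risk, the left side is no larger at $f_n$ than at $f_{0n}$; rearranging yields $\delta_n\leq \frac{1}{V}\sum_v (P_{n,v}^1-P_0)h_n$, where $h_n\equiv \bar L(\hat\Psi_{f_{0n}}(P_{n,v}))-\bar L(\hat\Psi_{f_n}(P_{n,v}))$ lies, for each $v$, in the difference class generated by $\mathcal G$. The problem is thus reduced to controlling an averaged, centered validation empirical process indexed by the data-dependent pair $(f_n,f_{0n})$.

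The decisive structural point is that cross validation decouples training from validation: conditionally on the training measures $P_{n,v}$, each $\hat\Psi_f(P_{n,v})$ is a fixed element of $\mathcal G\subset D[0,\tau_o]$ while $P_{n,v}^1$ is the empirical measure of an independent sample. Hence I would bound $\delta_n$ by the supremum over $\mathcal G-\mathcal G$ of $\frac{1}{V}\sum_v|(P_{n,v}^1-P_0)g|$ and apply, conditionally, a uniform maximal inequality driven by the entropy of $\mathcal G$. Since every $L(\hat\Psi_f(P_{n,v}))$ is cadlag with uniformly bounded sectional variation norm (by the standing assumptions on ${\bf \Psi}$ and $L$), the hypothesis $\sqrt{\sup_Q\log N(\epsilon,\mathcal G,L^2(Q))}\ll \epsilon^{-(1-\alpha)}$ holds with $\alpha\geq \alpha(d_1)=2/(2+d_1)$, and the entropy integral $J(\sigma)\equiv\int_0^\sigma\sqrt{\sup_Q\log N(\epsilon,\mathcal G,L^2(Q))}\,d\epsilon$ is finite and of order $\sigma^{\alpha}$. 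Using only the boundedness $M_1$ together with $J$ evaluated at the (constant) diameter of $\mathcal G$, the crude bound $\sup_g \frac{1}{V}\sum_v|(P_{n,v}^1-P_0)g|=O_P(n^{-1/2})$ already establishes the first assertion, $\delta_n=O_P(r(n))$ with $r(n)=O_P(n^{-1/2})$.

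To obtain the accelerated rate I would localize with (\ref{M2cv}). That hypothesis gives $\frac{1}{V}\sum_v P_0 h_n^2\leq M_{2,cv}\,\delta_n$, so $h_n$ is confined to a shrinking $L^2(P_0)$ ball of radius $\sigma_n$ with $\sigma_n^2$ of order $\delta_n$. A localized Bernstein/Bousquet-type maximal inequality then replaces the constant diameter by $\sigma_n$, producing a self-bounding relation $\delta_n\leq C\,n^{-1/2}J(\sigma_n)+(\text{remainder})$, with $\sigma_n^2$ of order $\delta_n$ and $J(\sigma_n)$ of order $\sigma_n^{\alpha}$; the remainder collects the $M_1$-dependent higher-order term of the maximal inequality and the fluctuation of $\frac{1}{V}\sum_v P_0 h_n^2$ about $M_{2,cv}\delta_n$. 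Solving this relation for $\delta_n$, by a peeling argument over dyadic ranges of $\delta_n$ with a union over the $V$ folds, extracts a power strictly better than $n^{-1/2}$; tracking the entropy exponent through the peeling is precisely what produces the $\alpha(\mathcal F)$-dependent improvement in the stated rate $\delta_n=O_P(n^{-1/2-\alpha(\mathcal F)})$, so that the loss-based distance $\delta_n^{1/2}$ converges faster than $n^{-1/4}$.

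The main obstacle, where essentially all the work sits, is this localized empirical-process control. The index $(f_n,f_{0n})$ is random and sample-dependent, so the supremum must genuinely be taken over the whole class and the peeling carried out in ranges of the dissimilarity, while one simultaneously conditions to neutralize the dependence of $\hat\Psi_f(P_{n,v})$ on the training data and recombines coherently across the $V$ folds. Two supporting steps make the argument close: deriving the entropy bound for $\mathcal G$ (hence $\alpha\geq 2/(2+d_1)$) from the sectional-variation-norm assumptions, and checking that the variance-to-regret relation survives the fold-averaging up to a negligible remainder. The precise final exponent is then a matter of bookkeeping in solving $\delta_n\leq C\,n^{-1/2}\sigma_n^{\alpha}$ with $\sigma_n^2$ of order $\delta_n$, and this is exactly the delicate point I would treat most carefully.
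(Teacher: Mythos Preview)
Your proposal is correct and follows essentially the same route as the paper: the same basic inequality from the minimizing property of $f_n$, the same conditioning on $P_{n,v}$ to reduce to an empirical process over $\mathcal{G}$, the Donsker/sectional-variation argument for the crude $O_P(n^{-1/2})$ rate, and then localization via (\ref{M2cv}) combined with a modulus-of-continuity bound to obtain the improved exponent. The only minor difference is one of presentation: where you set up a full self-bounding relation $\delta_n\lesssim n^{-1/2}\sigma_n^{\alpha}$ with $\sigma_n^2\asymp\delta_n$ and solve it by peeling, the paper simply plugs the crude bound $\delta_n=O_P(n^{-1/2})$ into (\ref{M2cv}) to get $\sigma_n=O_P(n^{-1/4})$ and then invokes once the finite-sample modulus-of-continuity bound from the cited reference to read off the stated rate, without iterating to a fixed point.
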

{\bf Proof:}
We have
\begin{eqnarray*}
0&\leq &\frac{1}{V}\sum_v P_0L(\hat{\Psi}_{f_n}(P_{n,v})-P_0L(\hat{\Psi}_{f_{0n}}(P_{n,v}))\\
&=&\frac{1}{V}\sum_v -(P_{n,v}^1-P_0)\{L(\hat{\Psi}_{f_n}(P_{n,v}))-L(\hat{\Psi}_{f_{0n}}(P_{n,v}))\}\\
&&+\frac{1}{V}\sum_v P_{n,v}^1 \{ L(\hat{\Psi}_{f_n}(P_{n,v}))-L(\hat{\Psi}_{f_{0n}}(P_{n,v}))\}\\
&\leq&\frac{1}{V}\sum_v -(P_{n,v}^1-P_0)\{L(\hat{\Psi}_{f_n}(P_{n,v}))-L(\hat{\Psi}_{f_{0n}}(P_{n,v}))\}.
\end{eqnarray*}
Consider the $v$-specific empirical process term, conditional on $P_{n,v}$.
By assumption, $\hat{\Psi}_{f_n}(P_{n,v})$ and $\hat{\Psi}_{f_{0n}}(P_{n,v})$ are elements of ${\bf \Psi}$ so that these are cadlag functions with a universal bound on their sectional variation norm. More precisely, conditional on the training sample, the ensembles cover the subspace $\{\hat{\Psi}_f(P_{n,v})): f\in {\cal F}\}$, which implies a corresponding class ${\cal G}=\{L(\hat{\Psi}_f(P_{n,v})): f\in {\cal F}\}$. 
We also assumed that $L({\bf \Psi})$ is contained in cadlag functions $D[0,\tau_o]$ with a universal bound on the sectional variation norm. 
Since this is a Donsker class, it follows that the empirical process term is $O_P(n^{-1/2})$. This proves that the left-hand side is $O_P(n^{-1/2})$. 
Given (\ref{M2cv}), it follows that, for each $v$, the $L^2(P_0)$-norm of $L(\hat{\Psi}_{f_n}(P_{n,v}))-L(\hat{\Psi}_{f_{0n}}(P_{n,v}))$ is $O_P(n^{-1/4})$. As in \cite{van2017generally}, relying on the finite sample modulus of continuity bound for empirical processes indexed by a class of functions with covering number bounded in terms of $\alpha({\cal F})$ \cite{van2011local}, it follows that  the $v$-specific empirical process is $O_P(n^{-1/2-\alpha({\cal F})})$.
This proves the theorem. $\Box$

\subsection{The Conditional Super Learner}

{\bf Single step conditional Super Learner, treating the experts as known:}
The Conditional Super Learner presented in this article corresponds with a particular type of meta-learning model ${\cal F}$ consisting of the following type of functions
\[
f_{{\cal A},d}(\psi_1(x),\ldots,\psi_J(x))=\sum_{l} I_{A_l}(x) \psi_{d(l)}(x),\]
which are indexed by a partitioning ${\cal A}=\{A_l, l=1,\ldots,L)$ of $[0,\tau]$ (i.e., covariate space), and a classifier $d:\{1,\ldots,L\}\rightarrow \{1,\ldots,J\}$ for which 
$d(l)$ represents the choice of function among the $J$ functions $\{\psi_1,\ldots,\psi_J\}$: its relation with $o(x)$ in previous section is that if $x\in A_l$, then $o(x)=d(l)$, so that $o(x)$ is determined by the partitioning $\cal A$ and the classifier $d$.
In this manner, ${\cal F}=\{f_{{\cal A},d}: {\cal A},d\}$ consists of such functions for varying choices of partitioning ${\cal A}$ and corresponding classifiers $d$.
Given $\hat{\Psi}_j$, $j=1,\ldots,J$, and thereby the cross-validated data set $(Y_i,(\hat{\Psi}_j\left(P_{n,v(i)} \right):j=1,\ldots,J))$
(here $v(i)$ is the sample split for which $i$ is in the $v$-th validation sample), the cross validation selector $f_n$ corresponds now with selecting a $({\cal A}_n,d_n)$ of partitioning ${\cal A}_n$ of the covariate space and classifier $d_n$ obtained with a particular algorithm aiming to approximately minimize the cross-validated empirical risk over all ensembles $f_{{\cal A},d}(\hat{\Psi}_j(P_n));j=1,\ldots,J)$ across possible choices ${\cal A}$ and $d$ (defined by meta-learning model ${\cal F}$). This Super Learner corresponds with the simple (single iteration) conditional super-learner which treats the candidate estimators $\hat{\Psi}_j$ as fixed, not affected by $d$.

{\bf Conditional Super learner:}

For each set $A$, we can define $\hat{\Psi}_{j,A}(P_n)=\hat{\Psi}_j(P_{n,A})$, where $P_{n,A}$ is the empirical probability measure of the subsample $\{i:X_i\in A\}$. We now define the ensembles \[
f_{{\cal A},d}(\hat{\Psi}_j:j=1,\ldots,J)(P_n)=\sum_l I_{A_l}(x)\hat{\Psi}_{d(l)}(P_{n,A_l})\] indexed by a partitioning ${\cal A}$ of rectangles and classification function $d$. 
Let ${\cal F}=\{f_{{\cal A},d}: {\cal A},d\}$ for a set of possible partitioning and classifiers d. We could also index this ensemble by a single classifier function $o(x)$ by setting $A_l=\{x:o(x)=l\}$, and $d(l)=l$. The cross-validated risk for this ensemble $\hat{\Psi}_{f_{{\cal A},d}}$ of $(\hat{\Psi}_j: j=1,\ldots,J)$ is given by
\[
CV(\hat{\Psi}_{f_{{\cal A},d}},P_n)=\frac{1}{V}\sum_{v=1}^V P_{n,v}^1L(f_{{\cal A},d}(\hat{\Psi}_j:j=1,\ldots,J)(P_{n,v})).\]
Our conditional Super Learner is defined as the cross validation selector:
\[
f_{{\cal A}_n,d_n}=\arg\min_{f\in {\cal F}}CV(\hat{\Psi}_{f_{{\cal A},d}},P_n).\]
Our algorithm for determining or approximating this cross validation selector uses an iterative algorithm of the type: let $m=0$, initiate a partitioning ${\cal A}^m=\{A_l^m:l\}$, and minimize
\[
(d^{m+1},{\cal A}^{m+1})=\arg\min_{d,{\cal A}} {\cal A}\frac{1}{V}\sum_{v=1}^V P_{n,v}^1 L\left(\sum_l I_{A_l}\hat{\Psi}_{d(l)}\left(P_{n,A_l^m}\right)\right),\]
and, update ${\cal A}^{m+1}=\{A_l^{m+1}:l\}$, and iterate.

Either of the classes of functions satisfies the conditions of Theorem 3.1, if the partitionings are not too fine so that the sectional variation norm remains uniformly bounded. 
{\bf Theorem 3.1} , therefore, also applies to our $CSL$ and as such this one will converge to the oracle choice at a rate faster than $O_p(n^{-1/4})$. In practicality, though, the CSL will be controlled by the variance bias trade-off as illustrated in the next section. 

\subsection{Bias Variance Trade-off of the Super Learners}
Let \[
d_0(\hat{\Psi}_{f_n}(P_{n,v}),\hat{\Psi}_{f_{0n}}(P_{n,v}))=
P_0L(\hat{\Psi}_{f_n}(P_{n,v}))-P_0L(\hat{\Psi}_{f_{0n}}(P_{n,v})).\]
So we have
\begin{eqnarray*}
\frac{1}{V}\sum_v d_0(\hat{\Psi}_{f_n}(P_{n,v}),\psi_0)&=&
 \frac{1}{V}\sum_v d_0(\hat{\Psi}_{f_n}(P_{n,v}),\hat{\Psi}_{f_{0n}}(P_{n,v}))\\
 &&+
 \frac{1}{V}\sum_v d_0(\hat{\Psi}_{f_{0n}}(P_{n,v}),\psi_0)\\
 &\equiv& E_n({\cal F})+B_n({\cal F}).
 \end{eqnarray*}
 As the size of ${\cal F}$ increases, the "estimation-term"  $E_n({\cal F})$ will worsen, while the "bias term" $B_n({\cal F})$ will improve.  Therefore, a good choice of meta-learning model needs to carefully trade-off the variance term $E_n({\cal F})$ and bias term $B_n({\cal F})$ at the meta-learning level.
 The optimal choice of meta-learning model  ${\cal F}$ among a set of candidate meta-learning models will depend on this trade-off, and the precise trade-off depends on the candidate meta-learning models and the library estimators.
 By the previous theorem, we always have
 \[
 E_n({\cal F})=O_P(n^{-1/2-\alpha(d_1)}).\]
 Thus, the ${\cal F}$-super-learner will always achieve a rate of convergence $n^{-1/2-\alpha(d_1)}$ w.r.t. loss-based dissimilarity under the assumption that the oracle selected candidate estimator achieves at minimal that same rate of convergence.

If one considers relatively large meta-learning models, as is easily the case for our conditional super-learner, then there is a risk that one worsens the performance relative to simpler meta-learning models. Therefore, the most sensible strategy is to define a sequence of Super Learner models ${\cal F}_k$ with increasing complexity as $k$ increases, and use cross validation to select the best ${\cal F}_k$-specific super-learner among $k=1,\ldots,K$. Due  to the oracle inequality for the discrete cross validation selector this guarantees that the resulting super-learner will be asymptotically equivalent with the oracle selected super-learner among the $K$ ${\cal F}_k$-specific super-learners. 
In this manner, one is guaranteed to outperform any given ${\cal F}$-specific super-learner by including this choice of meta-learning model in our collection of Super Learners.
We will refer to this as double super-learning. In the next section we study double super-learner in some detail.

\subsection{The Double Super-Learner to learn the best Super Learner}

Let ${\cal F}_k$, $k=1,\ldots,K$, be a collection of meta-learning models. 
In  a typical setting, the size of the family ${\cal F}_k$ will be growing as $k$ increases, so that the corresponding meta-learning step is increasingly data adaptive as $k$ increases. Each family ${\cal F}_k$, defines a collection of candidate estimators $\hat{\Psi}_f(P_n)$ indexed by $f\in {\cal F}_k$, and the meta-learning step is defined by minimizing the cross-validated risk $CV(\hat{\Psi}_f,P_n)$ over all $f\in {\cal F}_k$.  Some of the families ${\cal F}_k$ might be discrete sets and linear models so that the meta learner corresponds with simply selecting the best estimator among the $J$ candidate estimators in the library and the best linear combination of these $J$ estimators, respectively.



{\bf ${\cal F}_k$-cross validation selector:}
Let
\[
f_{n,k}=\arg\min_{f\in {\cal F}_k} CV(\hat{\Psi}_f,P_n)\]
be the Super Learner defined by minimizing the cross-validated risk of $\hat{\Psi}_f$ over all $f\in {\cal F}_k$.

{\bf ${\cal F}_k$-super-learner:}
The ${\cal F}_k$-super-learner is now defined by the corresponding estimator applied to the whole sample:
\[
\hat{\Psi}_{{\cal F}_k}(P_n)=\hat{\Psi}_{f_{n,k}}(P_n),\]
or, the immediate available (not requiring rerunning $\hat{\Psi}_{f_{n,k}}$ on whole sample after having determined $f_{n,k}$): 
\[
\hat{\Psi}_{{\cal F}_k}(P_n)=\frac{1}{V}\sum_{v=1}^V \hat{\Psi}_{f_{n,k}}(P_{n,v}).\]
For notational convenience, let's also use notation $\hat{\Psi}_k^{SL}(P_n)=\hat{\Psi}_{f_{n,k}}(P_n)$.

{\bf Cross validation selector among the $K$ ${\cal F}_k$-super-learners:}
Let 
\begin{eqnarray*}
k_n&=&\arg\min_k \frac{1}{V}\sum_{v=1}^V P_{n,v}^1 L(\hat{\Psi}_k^{SL}(P_{n,v})).
\end{eqnarray*}
We also consider the oracle selector of $k$:
\begin{eqnarray*}
\tilde{k}_n&=&\arg\min_k \frac{1}{V}\sum_{v=1}^V P_0L(\hat{\Psi}^{SL}_k(P_{n,v})).\\
\end{eqnarray*}

{\bf Proposed double super-learner:}
Then, our double super-learner and proposed estimator is defined by 
\[
\hat{\Psi}(P_n)=\hat{\Psi}^{SL}_{k_n}(P_n).\]

{\bf Inner $V_1$-fold cross validation within training sample $P_{n,v}$:} 
Note that for computing $k_n$, and thereby $\hat{\Psi}^{SL}_k(P_{n,v})$ we need to apply a cross validation scheme to the training sample $P_{n,v}$ itself. 
Consider a $V_1$-fold cross validation scheme where for each sample split $v_1=1,\ldots,V_1$ of a sample $P_{n,v}$   into validation sample $P_{n,v,v_1}^1$ and complementary training sample $P_{n,v,v_1}$. We can then apply our definition of $\hat{\Psi}^{SL}_k$ above to the data $P_{n,v}$ using this inner $V_1$-fold cross validation scheme (i.e., replace $P_n$ by $P_{n,v}$ and replace $P_{n,v}$, by $P_{n,v,v_1}$).

{\bf Double cross validation:} If we apply the inner $V_1$-fold cross validation scheme to a training sample $P_{n,v}$ from an outer cross validation scheme, then this $V_1$-cross validation splits $P_{n,v}$ into $P_{n,v,v_1}$ and $P_{n,v,v_1}^1$, $v_1=1,\ldots,V_1$. So $(V,V_1)$-double cross validation maps $P_n$ into $V\times V_1$-sample splits $(P_{n,v,v_1},P_{n,v,v_1}^1:v_1=1,\ldots,V_1, v=1\ldots,V)$.

\subsection{Oracle inequality for double super-learner showing asymptotic equivalence with super-learner using oracle choice of meta-learning model}
By \cite{van2006oracle} we have, for any $\delta>0$, and constant $C(M_1,M_2,\delta)=2(1+\delta)^2(M_1/3+M_2/\delta)$,
\[ \frac{E_0 \frac{1}{V}\sum_v d_0(\hat{\Psi}^{SL}_{k_n}(P_{n,v}),\psi_0)}{E_0\\min_k \frac{1}{V}\sum_v d_0(\hat{\Psi}^{SL}_k(P_{n,v}),\psi_0)+C(M_1,M_2,\delta)\frac{\log K_n}{np}}
\leq (1+\delta) \]
If  the number of Super Learners does not grow faster than a polynomial in sample size $n$, i.e, $K_n\ll n^p$ for some $p$, and 
\[
\frac{n^{-1}\log n}{E_0\min_k \frac{1}{V}\sum_v d_0(\hat{\Psi}^{SL}_k(P_{n,v}),\psi_0)}\rightarrow 0,\]
then the  double super-learner $\hat{\Psi}^{SL}_{k_n}$ is asymptotic equivalent with the oracle selected super-learner $\hat{\Psi}_{\tilde{k}_n}$:
\[
\frac{E_0 \frac{1}{V}\sum_v d_0(\hat{\Psi}^{SL}_{k_n}(P_{n,v}),\psi_0)}{E_0\\min_k \frac{1}{V}\sum_v d_0(\hat{\Psi}^{SL}_k(P_{n,v}),\psi_0)}
\rightarrow 1\mbox{ as $n\rightarrow\infty$.}
\]

\subsection{A finite sample oracle inequality for an $\epsilon$-net double super-learner}
The above results show that a double super-learner will be asymptotically equivalent with choosing the best meta-learning algorithm among the sequence of Super Learners, and that, this oracle selected Super Learner approximates the oracle choice in its meta-learning model at a rate depending on its covering number, but either way, at least as fast as $n^{-1/4}$. In this subsection, we show a stronger result demonstrating that the double super-learner is able to optimally trade-off bias and variance across all possible ensembles across the meta-learning models. However, this requires generating a more refined set of meta-learning models, namely, for each meta-learning model, we define an  $\epsilon$-net (like sieve of increasing complexity) representing a resolution.  

Let ${\cal F}_k=\{f_{k,\alpha}:\alpha\in {\cal E}_k\}$ for some parametrization $\alpha\rightarrow f_{k,\alpha}$ with the $\alpha$-parameter varying over a set ${\cal E}_k$.
 
For each $k$, let ${\cal E}_k(\epsilon)=\{\alpha_{k,j}: j=1,\ldots,N_k(\epsilon)\}\subset {\cal E}_k$ be a finite set of values in ${\cal E}_k$ so that the parameter space of $\{\hat{\Psi}_{\alpha}(P_n):\alpha\in {\cal E}_k(\epsilon)\}$
represents an $\epsilon$-net of $\{\hat{\Psi}_{\alpha}(P_n):\alpha\in {\cal E}_k\}$ w.r.t. some dissimilarity (chosen to dominate or be equivalent with the loss-based dissimilarity). Let $N_k(\epsilon)$ the number of elements in ${\cal E}_k(\epsilon)$.

{\bf $(k,\epsilon)$-th super-learner:}
For each $k,\epsilon$, given a sample $P_n$, let
\begin{equation}\label{kthMLt}
\alpha_{k,\epsilon}(P_n)\equiv \arg\min_{\alpha\in {\cal E}_k(\epsilon)}\frac{1}{V_1}\sum_{v_1=1}^{V_1} P_{n,v_1}^1 L(\hat{\Psi}_{k,\alpha}(P_{n,v_1}) ).\end{equation}
Then, $\hat{\Psi}^{SL}_{k,\epsilon}:{\cal M}_{NP}\rightarrow {\bf \Psi}$ defined by \[
\hat{\Psi}^{SL}_{k,\epsilon}(P_n)\equiv \hat{\Psi}_{k,\alpha_{k,\epsilon}(P_n)}(P_n)\]
 is a super-learner based on the $(k,\epsilon)$-th meta-learning algorithm defined by (\ref{kthMLt}), i.e., by minimizing the cross-validated risk over a $\epsilon$-net of the $k$-th specific family of the library of candidate estimators $(\hat{\Psi}_j:j=1,\ldots,J)$.

{\bf Set of candidate super-learners:}
Now, $\hat{\Psi}^{SL}_{k,\epsilon}:{\cal M}_{NP}\rightarrow {\bf \Psi}$, $k=1,\ldots,K$, and $\epsilon$ varying over a grid, represents a set of candidate estimators, each one of them being a super-learner based on $(k,\epsilon)$-specific meta-learning algorithm defined by minimizing the cross-validated risk over a parametric family. 

{\bf cross validation selector of choice of meta learning, including resolution $\epsilon$:}
Let 
\begin{eqnarray*}
(k_n,\epsilon_n)&=&\arg\min_{k,\epsilon} \frac{1}{V}\sum_{v=1}^V P_{n,v}^1 L(\hat{\Psi}^{SL}_{k,\epsilon}(P_{n,v}))\\
&=&\arg\min_{k,\epsilon} \frac{1}{V}\sum_{v=1}^V P_{n,v}^1 L(\hat{\Psi}_{k,\alpha_{k,\epsilon}(P_{n,v})}(P_{n,v})),
\end{eqnarray*}
where
\[
\alpha_{k,\epsilon}(P_{n,v})=\arg\min_{\alpha\in {\cal E}_k(\epsilon)}\frac{1}{V_1}\sum_{v_1=1}^{V_1} P_{n,v,v_1}^1 L(\hat{\Psi}_{k,\alpha}(P_{n,v,v_1}) ).
\]

{\bf Proposed double  super-learner for theoretical analysis:}
Then, our final super-learner and proposed estimator is defined by 
\[
\hat{\Psi}(P_n)=\hat{\Psi}^{SL}_{k_n,\epsilon_n}(P_n).\]

Let $K_n$ be the number of values of $(\epsilon,k)$ over which we minimize in definition of $(k_n,\epsilon_n)$. 
We have the following theorem.
\begin{theorem}\label{theoremepsnet}
Suppose 
\begin{eqnarray}
Ed_0(\hat{\Psi}_{k,\alpha_{k,\epsilon}(P_n)}(P_{n,v}),\psi_0) &\leq&
E \frac{1}{V_1}\sum_{v_1} d_0(\hat{\Psi}_{k,\alpha_{k,\epsilon}(P_{n,v})}(P_{n,v,v_1}),\psi_0).
\label{aa}
\end{eqnarray}
Then, we have
\[
\begin{array}{l}
E \frac{1}{V}\sum_v d_0(\hat{\Psi}_{k_n,\epsilon_n}^{SL}(P_{n,v}),\psi_0)\leq (1+\delta)^2\\
\min_{k,\epsilon} \frac{1}{V}\sum_v \left \{ 
E  \min_{\alpha\in {\cal E}_k(\epsilon)} \frac{1}{V_1}\sum_{v_1} d_0(\hat{\Psi}_{k,\alpha}(P_{n,v,v_1}),\psi_0)+C(M_1,M_2,\delta) \frac{\log N_k(\epsilon)}{n(1-p)p_1}\right \}\\
\hfill +C(M_1,M_2,\delta)\log K_n /(np).
\end{array}
\]
\end{theorem}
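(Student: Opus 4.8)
The plan is to derive the bound by nesting two applications of the finite-sample oracle inequality for the discrete cross-validation selector from \cite{van2006oracle} that was already invoked above for the single-level double super-learner. The structural observation is that $\hat{\Psi}^{SL}_{k_n,\epsilon_n}$ arises from two stacked discrete selections: within each outer training sample $P_{n,v}$, the inner $V_1$-fold scheme selects $\alpha$ over the $\epsilon$-net ${\cal E}_k(\epsilon)$ of $N_k(\epsilon)$ candidates; and the outer $V$-fold scheme then selects the pair $(k,\epsilon)$ among the $K_n$ candidate super-learners $\{\hat{\Psi}^{SL}_{k,\epsilon}\}$. Each selection has the form ``minimize cross-validated empirical risk over a finite family of estimators,'' so the oracle inequality applies verbatim at both levels; the only care needed is to feed each level its correct validation-fold size, namely $n(1-p)p_1$ for the inner scheme and $np$ for the outer scheme.

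First I would fix $(k,\epsilon)$ and treat the inner super-learner. Because $L({\bf \Psi})$ is contained in the cadlag class with a uniform bound on the sectional variation norm and every candidate maps into ${\bf \Psi}$, the constants $M_1,M_2$ are finite uniformly in $(k,\epsilon,\alpha)$, so a single $C(M_1,M_2,\delta)$ serves throughout. Recognizing $\alpha_{k,\epsilon}(P_{n,v})$ as the honest cross-validation selector over the $N_k(\epsilon)$ candidates relative to the inner scheme on $P_{n,v}$, the oracle inequality yields
\[
E\frac{1}{V_1}\sum_{v_1}d_0(\hat{\Psi}_{k,\alpha_{k,\epsilon}(P_{n,v})}(P_{n,v,v_1}),\psi_0)\leq (1+\delta)\left[E\min_{\alpha\in{\cal E}_k(\epsilon)}\frac{1}{V_1}\sum_{v_1}d_0(\hat{\Psi}_{k,\alpha}(P_{n,v,v_1}),\psi_0)+C(M_1,M_2,\delta)\frac{\log N_k(\epsilon)}{n(1-p)p_1}\right].
\]
Assumption (\ref{aa}) is exactly what lets me replace the inner cross-validated risk on the left by the risk of the whole-inner-training-sample refit $\hat{\Psi}^{SL}_{k,\epsilon}(P_{n,v})$; averaging over $v$ then gives, for every $(k,\epsilon)$, an upper bound on $E\frac{1}{V}\sum_v d_0(\hat{\Psi}^{SL}_{k,\epsilon}(P_{n,v}),\psi_0)$ equal to $(1+\delta)$ times the bracketed inner quantity averaged over $v$.

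Next I would handle the outer selection. Since $(k_n,\epsilon_n)$ is the discrete cross-validation selector over the $K_n$ estimators $\{\hat{\Psi}^{SL}_{k,\epsilon}\}$ under the outer $V$-fold scheme, the same oracle inequality (in additive-remainder form, with the factor $(1+\delta)$ on the remainder absorbed into $C$) gives
\[
E\frac{1}{V}\sum_v d_0(\hat{\Psi}^{SL}_{k_n,\epsilon_n}(P_{n,v}),\psi_0)\leq (1+\delta)\min_{k,\epsilon}E\frac{1}{V}\sum_v d_0(\hat{\Psi}^{SL}_{k,\epsilon}(P_{n,v}),\psi_0)+C(M_1,M_2,\delta)\frac{\log K_n}{np}.
\]
Substituting the per-$(k,\epsilon)$ bound from the previous step and taking the minimum over $(k,\epsilon)$ multiplies the two $(1+\delta)$ factors into the $(1+\delta)^2$ in front of the inner term, while the outer remainder $C\log K_n/(np)$ survives additively, reproducing the claimed inequality.

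I expect the main obstacle to be the bookkeeping that glues the two levels together rather than any single estimate. Specifically, I must confirm that $\alpha_{k,\epsilon}(P_{n,v})$ is genuinely a cross-validation selector for the inner scheme (so the oracle inequality enters with validation size $n(1-p)p_1$, not $np$ or $n$), and that assumption (\ref{aa}) is precisely the bridge from the cross-validated inner risk to the whole-sample refit $\hat{\Psi}^{SL}_{k,\epsilon}(P_{n,v})$ that the outer inequality actually references; without (\ref{aa}) the two oracle inequalities would be stated about different objects and could not be chained. A secondary check is that the finiteness of $M_1,M_2$, and hence the single constant $C(M_1,M_2,\delta)$, is uniform over all $(k,\epsilon,\alpha)$, which is guaranteed by the uniform sectional-variation-norm bound on $L({\bf \Psi})$.
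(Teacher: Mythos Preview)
Your proposal is correct and follows essentially the same approach as the paper: both proofs nest the discrete cross-validation oracle inequality from \cite{van2006oracle} at the outer level (over the $K_n$ pairs $(k,\epsilon)$, validation size $np$) and at the inner level (over the $N_k(\epsilon)$ candidates in ${\cal E}_k(\epsilon)$, validation size $n(1-p)p_1$), and use assumption (\ref{aa}) together with the trivial bound $E\min\leq \min E$ to chain them. The only cosmetic difference is order of presentation---the paper applies the outer inequality first and then substitutes the inner bound, whereas you establish the per-$(k,\epsilon)$ inner bound first and then feed it into the outer inequality---but the logical content is identical.
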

{\bf Proof:}
As shown in previous section, regarding the cross validation selector $(k_n,\epsilon_n)$ of $(k,\epsilon)$ we have
\begin{eqnarray}
E\frac{1}{V}\sum_v d_0(\hat{\Psi}_{k_n,\epsilon_n}^{SL}(P_{n,v}),\psi_0)&\leq & (1+\delta)E \min_{\epsilon,k}\frac{1}{V}\sum_v  d_0(\hat{\Psi}_{k,\epsilon}^{SL}(P_{n,v}),\psi_0)\nonumber \\
&& +C(M_1,M_2,\delta)\frac{\log K_n}{np} \label{firstoracle}
\end{eqnarray}
Recall $\hat{\Psi}_{k,\epsilon}^{SL}(P_{n,v})=\hat{\Psi}_{k,\alpha_{k,\epsilon}(P_{n,v})}(P_{n,v})$.
Regarding the cross validation selector $\alpha_{k,\epsilon}(P_{n,v})$ we have
\begin{eqnarray}
E\frac{1}{V_1}\sum_{v_1} d_0(\hat{\Psi}_{k,\alpha_{k,\epsilon}(P_{n,v})}(P_{n,v,v_1}),\psi_0)&\leq& (1+\delta)E \min_{\alpha\in {\cal E}_k(\epsilon)} \frac{1}{V_1}\sum_{v_1} d_0(\hat{\Psi}_{k,\alpha}(P_{n,v,v_1}),\psi_0) \nonumber \\
&&+C(M_1,M_2,\delta) \frac{\log N_k(\epsilon)}{n(1-p)p_1}. \label{secondoracle} 
\end{eqnarray}
Suppose (\ref{aa}) holds.
Then, first term on right-hand side of (\ref{firstoracle}) is bounded by 
\[
(1+\delta) \min_{k,\epsilon} \frac{1}{V}\sum_v E \frac{1}{V_1}\sum_{v_1} d_0(\hat{\Psi}_{k,\alpha_{k,\epsilon}(P_{n,v})}(P_{n,v,v_1}),\psi_0).\]
Using the second oracle inequality (\ref{secondoracle}) yields that we can bound this by $(1+\delta)$ times
\[
\min_{k,\epsilon} \frac{1}{V}\sum_v \left \{ 
(1+\delta) E  \min_{\alpha\in {\cal E}_k(\epsilon)} \frac{1}{V_1}\sum_{v_1} d_0(\hat{\Psi}_{k,\alpha}(P_{n,v,v_1}),\psi_0)+C(M_1,M_2,\delta) \frac{\log N_k(\epsilon)}{n(1-p)p_1}\right \}.
\]
This proves the stated bound.
$\Box$

\paragraph{Discussion of Theorem \ref{theoremepsnet}}
Consider the displayed inequality in Theorem \ref{theoremepsnet}.
Note that the last term is negligible as long as $K=K_n$ is not growing faster than polynomial in $n$.
So it is all about the first term. For each $k$, view $\hat{\Psi}_{k,\alpha}(P_{n,v,v_1})$ as a $k$-specific data adaptive model with parameter $\alpha\in {\cal E}_k$. First consider the case that $\psi_0$ is contained in one of these $k$-specific models with probability tending to 1 as $n\rightarrow\infty$. For  the $k$-specific  data adaptive model,  $\min_{\alpha\in {\cal E}_k(\epsilon)}$ denotes a bias term while the covering number term  $\log N_k(\epsilon)/(n(1-p)p_1)$ represents a variance term, and it is known that optimizing over $\epsilon$ the sum of these two terms optimally trades off bias and variance (up till constant), resulting in a minimax rate of convergence for this $k$-specific data adaptive model \cite{van2003unified} 

The outer $\min_k$ shows that we achieve the minimax rate of convergence corresponding with the smallest of the data adaptive models that contains the true $\psi_0$. So if one of the data adaptive models contains the true $\psi_0$ then we would achieve the minimax adaptive rate of convergence. For most choices of $k$-specific families, one does not expect that that $\psi_0$ is contained by any of them, so that the bias term is not just driven by the $\epsilon$-resolution, but also by the bias of the overall $k$-specific model. Either way, the leading term in this oracle inequality is still properly trading of the actual bias with the variance term.

\section{Practical Aspects: Initialization, Collapsing and Soft CSL}

We initialize CSL by picking the type of experts (e.g trees, linear models, etc) and a random subset of the data to fit each expert to introduce diversity. Please note that as its counterpart (K-means), CSL can get stuck in local minimum. Therefore we run the algorithm a few times in our experiments. We use a validation set (different than the test set) to select the best solution. While running CSL it is often the case that some of the experts will collapse (e.g model selection). This happens when the range of values predicted by $o(x)$ is less than $K$ or similarly when the size of $\{x: o(x)=k\}$ becomes very small. In these cases we re-adjust the size of $K$ as the algorithm runs. 

Finally, please note that we can use the step when we are fitting the experts to introduce regularization. If the oracle $o(x)$ also estimates probability of an observation belonging to a model  (e.g. logistic regression), then we can use $p({0}(\bf{x}) = k,\bf{x})$ to introduce similarity among the experts when we are fitting them, specially around the boundaries defined by the oracle. 

\section{Simulations}

In this section we describe our experiments. We consider the dataset of all regression problems from the Penn Machine Learning Benchmarks \cite{olson2017pmlb}. In the first set of experiments, dataset where the number of observations was between 200 and 500000 ($N=84$) were considered. In the second set of experiments, we selected a subset of those with at least 2000 points ($N=19$). We report $R^2$ as the performance metric. In the first experiments, datasets were split in $80\%$ training and $20\%$ testing sets. For the second experiment, we need a validation set, therefore data was split in $70\%/ 15\%/ 15\%$ for train/validation/test.

In all our experiments we use the following set of base algorithms or experts:

 \begin{enumerate}
\item Ridge: alphas = [1e-4, 1e-3, 1e-2, 1e-1, 1, 2, 4, 8, 16, 32, 64, 132]
\item ElasticNet: l1 ratio = 0 (Lasso) 
\item ElasticNet: l1 ratio = 0.5
\item Decision Tree: max depth = 4
\item Decision Tree: max depth = 5
\item Decision Tree: max depth = 6
\end{enumerate}

Although the CSL can be used with more complex models as experts, using simple linear or tree models have two appealing: protects the algorithm for overfitting and has implications for its interpretability as discussed below. 

\begin{figure}[t!]
 \includegraphics[width=0.6\textwidth]{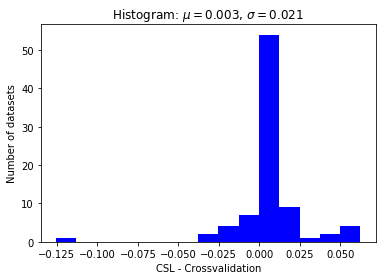}
\centering
\caption{One shot CSL vs cross validation}
\label{Figure2}
\end{figure}

\subsection{Single step CSL and its comparisson to cross validation}

First, we wanted to evaluate empirically how CSL (${\cal F}_3$ class of meta learners in the theoretical section) performs compared to the naive strategy of using the expert model selected through cross validation for all the points (${\cal F}_1$ class in the theoretical section). For this we used the library of experts described above and a decision tree algorithm with $max\_depth=1$ as our oracle. No partition was allowed if the terminal node did not have more than $2\%$ of observations belonging to it. This decision tree algorithm was selected as the oracle because if not partition is performed, then the minimization of equation \ref{eqn1} results in selecting the model that minimizes the cross validation error (equal to ${\cal F}_1$ class ). As such, our CSL in this case includes the possibility of just using cross validation and it should perform, on average, better than using cross validation to select one model. Please note that in this section the experts are obtained on all the training data and only one iteration is allowed for the meta. The empirical evaluation of this CSL was compared to the performance obtained if we use cross validation to select the best model from one of the sixth algorithms mentioned above. 
  
 Figure \ref{Figure2} shows the results obtained when we compared $R^2$ for both CSL and cross validation. In $77.5\%$ of the datasets CSL had at least the same performance as cross validation and in $20 \%$ its $R^2$ was bigger by at least 1. Although these results show that CSL improves over naive cross validation, overfitting can happen and extra attention needs to be paid. In fact, we obtained an outlier where cross validation outperformed the CSL by 0.125. This problem is the synthetic dataset $658-fri-c3-250-25$ from the Friedman's regression datasets \cite{olson2017pmlb,friedman1991multivariate}. The same is a hard problem where algorithms are prompt to overfit since training only contains 200 points with 25 explanatory variables, several of them correlated to each other. Additionally, for those datasets where CSL was better, on average they had 3805 data points compared to 2368 in those datasets where cross validation did better. Therefore, as a general rule, CSL needs more than 2500 data points to perform better than cross validation. These empirical results corroborate our suggestions on the theoretical session and the need for a two tier Super Learner algorithm where the type of meta strategy, or class of functions as defined here, is also selected. 
 
 In the next section, we will compare a full CSL  to stacking. ${\cal F}_2$ vs ${\cal F}_3$ meta learning strategies in our theoretical discussion.

\subsection{CSL versus Stacking} 
\begin{table}
 \begin{tabular}{|l|c|c|c|c|}%
    \hline
    \bfseries Dataset & \bfseries CSL\_mean & \bfseries Stack\_mean & \bfseries Diff & \bfseries Test result
    \csvreader[head to column names]{grade.csv}{}
    {\\\hline \dataset & \islmean & \stackmean & \diffmean & \test}
    \tabularnewline\hline
    \end{tabular}
    \caption{Results comparing CSL and stacking on 19 regression datasets. In one dataset stacking is significantly better than CSL. In 8 problems both algorithms are statistically the same. In 11 problem CSL is better.}
    \label{table_stacking}
\end{table}    

{\bf Stacking}  \cite{wolpert1992stacked, breiman1996stacked,SmythWolpert1997} is
a general procedure where a learner is trained to combine the individual learners. The base level models are trained on the original training set, then a meta-model is trained on the outputs of the base level model as features. The base level often consists of different learning algorithms. In our experiments, we use the same 6 base models defined in previous section. As meta-model in stacking we use a linear regression model which corresponds to ${\cal F}_2$.  

{\bf CSL}. In this experiment, we use the same set of experts as in previous section and we use a two layer feed-forward neural network as the oracle. The oracle was written in PyTorch and fitted with Adam optimizer with learning rates of 0.15. The number of epochs at each iteration was a function of the sample size ($ \frac{3000}{\log(N)^2}$). The hidden layer was also set as a function of the sample size ($\min(2\log(N), 150)$). After the first linear layer and before the Relu activation function, batch normalization was used. A dropout layer with $p=0.2$ is used before the second linear layer.

In Table \ref{table_stacking} we show results from comparing CSL and stacking on regression datasets. For each dataset we run each algorithm 10 times by spliting training, validation and testing sets using different seed. \emph{CSL\_mean} shows the mean $R^2$ over all experiments. Similarly, \emph{Stack\_mean} shows the mean $R^2$ of the stacking experiments. \emph{Diff} show the difference between \emph{CSL\_mean} and \emph{Stack\_mean}. Column \emph{Test results} shows whether a t-test found the difference in mean to be significant. There were 19 datasets in this experiment. For one dataset stacking is significantly better than CSL. In 8 problems both algorithms are statistically the same. In 11 problem CSL is better. In the experiment for dataset \emph{574\_house\_16H}, one of the runs produces an outlier which is responsible for the mean difference.

\subsection{CSL on hierarchical synthetic data}

Hierarchical models are extremely important in fields like Medicine \cite{normand1997statistical}. CSL, due to its architecture, seems specially suited to analyze hierachical data. In this section, we evaluate the performance of CSL on hierarchical synthetic problems. Specifically, we wanted to investigate if:

\begin{enumerate}
\item CSL can discover hierarchical structures.
\item Compare its performance in these type of problems to gradient boosting and random forest.
\end{enumerate}

Given a real dataset $\{(x_i, y_i)\}^N_1$ from the Penn Machine Learning Benchmarks, we generate syntetic data by using the observations from the covariates but generating new labels. Here is how we generated the new ($\tilde{y}$): 
\begin{enumerate}
\item Sample 70\% of the data.
\item Use the covariates to find $K=3$ clusters using K-means algorithm.
\item Fit a 2-layer neural network using the cluster id as a label.
\item Using the neural network, predict a cluster id ($l_i$) on each of the original observations, creating the dataset $\{(x_i, y_i, l_i)\}^N_1$.
\item Fit each subset $\{(x_i, y_i) \text{ such that } l_i=k \}$ for $k \in \{1, 2, 3\}$ to a regression model (Ridge). Use the prediction from the regression model as the new synthetic label ($\tilde{y}$).
\end{enumerate}

\begin{table}
 \begin{tabular}{|l|c|c|c|c|}%
    \hline
    \bfseries Dataset & \bfseries CSL & \bfseries Base Expert & \bfseries RF & \bfseries GBM \\
    \hline
    564\_fried & \textbf{0.97} (0.02) & 0.45 (0.17) & 0.82 (0.04) &  0.82 (0.06) \\
    574\_house\_16H & \textbf{0.98} (0.01) & 0.84 (0.04) & 0.87 (0.01) & 0.93 (0.01)\\
    294\_satellite\_image & \textbf{0.99} (0.01) & 0.88 (<0.01) & 0.98 (0.01) & 0.98 (<0.01)\\
    218\_house\_8L & \textbf{0.97} (0.02) & 0.75 (0.03) & 0.94 (0.01) & 0.93 (0.01)
    \tabularnewline\hline
    \end{tabular}
    \caption{Results on running CSL on synthetic data. Comparison are made with random forest, gradient boosting and the best preforming base expert. For each dataset the we generate 10 synthetic problems. For each method, the mean test $R^2$ and standard deviation is shown.}
    \label{table_synt}
\end{table}

For the experiments in Table \ref{table_synt}, gradient boosting was ran with the following parameters: min\_child\_weight=50, learning rate = 0.1, colsample\_bytree= 0.3, max\_depth= 15, subsample=0.8, and with 500 trees. For random forest, we used 1000 trees, max\_features='sqrt' and we found max\_depth with cross validation for each problem. The problems are a subset of the problems in Table \ref{table_stacking}, where the $R^2$ of the base expert is lower than 0.9. The CLS alrorithm was initialized using 11 linear models as based experts. It turns out that by using trees together with linear models on these problems the algorithm sometimes would get stuck in suboptimal local minima. 

Table \ref{table_synt} shows the results ($R^2$) of CSL on synthetic data compared to random forest, gradient boosting or the best base expert (selected using cross validation). As it can be seen, CSL significantly outperforms all other algorithms as expected.

\subsection{Implications on interpretability} 

In many high-stake domains like medicine and the law, interpretability of machine learning algorithms is highly desirable since errors can have dire consequences \cite{valdes2016mediboost, louzada2016classification, cabitza2018machine}. This fact has led to a renew interest for the development of interpretable models. Interpretability, however, can only be judge relative to the field of application as it is in the eyes of the beholders. In fact, there is a considerable disagreement on what the concept means and how to measure it \cite{lipton2016mythos,lipton2017doctor, doshi2017towards}. Different algorithms afford different degrees of interpretability and even  black boxes can be investigated to gain some intuition on how predictions are being made. For instance, variable importance or distillation can be used to interpret neural networks \cite{tan2018transparent}. This level of interpretability might be enough for applications that do not impose high risk. In other applications (e.g medicine), the need to understand the models globally rises \cite{valdes2016mediboost}. Without attempting to formally quantify and define interpretability here, we will illustrate below how the $CSL$ results in models that are highly transparent. 

{\bf Predicting house prices.} To illustrate how CSL can be use as an interpretable algorithm we use a dataset of house rental prices in New York City. We have 4 input variables: latitude, longitude, number of bedrooms and number of bathrooms. Two make it really simple to visualize and interpet, the oracle was given two of the variables: latitude and longitude. The CSL model found a solution in which the oracle parition the space of latitude and longitude in 3 regions (see top of Figure \ref{Figure1}) and for each region a tree of depth 5 predicts the house prices. This simple solution get an $R^2$ of 0.68. As a comparison, the best single model of a tree of depth 5 has an $R^2$ of 0.62 and a random forest with 500 trees (of depth 9) has an $R^2$ of 0.72. To find the best random forest we did grid search on the number of variables and the depth of the trees.

The simple solution of a tree of depth 5 is interpretable since a tree of depth 5 can be easily examined. Also, 3 trees of depth 5 can be easily examined as well as the 2 dimensional space where the oracle split the restricted input regions. On the other hand, the random forest with 500 trees and unrestricted depth can not be called interpretable. 

\subsection{Connection to other algorithms}

Our CSL is related to different algorithms and highlighting them here can give us additional intuitions about its performance, what problems are best suited for it and how to improve it. First, please note that $o(x)$ partitions the space in $K$ regions or subsets $\{\mathcal{R}\}_1^K$ where the models $\{F_{k}(x)\}_1^K$ are used for prediction; ergo establishing the connection between meta learning and generalized partitioning machines. Different from recursive algorithms like CART, MediBoost or the Additive Tree, CSL partitions defined by the oracle can have complex forms and are not forced to be perpendicular to the covariates. Additionally, $CSL(x)$ also generalizes the strategy of using cross validation to select the best model. Please note that if in equation \ref{eqn1} we force $o(\bf{x}) = c$ where $c$ is a constant $\in \{1..K\}$ then the solution to \ref{eqn4}, $\hat{o}(\bf{x})$, just selects the model that minimizes the cross validation error. As such, using cross validation to select the best model is the simplest case of $CSL(x)$ where the meta learner predicts a constant regardless of the covariate. CSL can also be thought of a generalization of the K-means algorithm. If the expert models are constant, and the oracle has infinity capacity to always be able to assign each observation to the best mean, then CSL becomes the K-mean algorithm. Finally, as shown above, due to its archictecure, CSL is a non parametric hierarchical algorithm and performs quite well for this type of problems. 

\section{Conclusions}

In this work we introduced the CSL algorithm. We proved theoretically and empirically how we can extend the idea of meta learning and develop an algorithm that outperforms the naive use of cross validation to select the best model.We proved that the CSL has a rate of convergance faster than $O_p(n^{-1/4})$. More over, we have obtained very interesting and practical results. For instance, CSL outperformed stacking in the datasets analyzed. Additionally, it significantly outperformed Random Forests or Gradient Boosting in the analysis of Hierarchical Data. Finally, its connection to interpretability and other algorithms were highlighted to deepen our understanding of its performance. As such, the CSL is an algorithm suited for the analysis of medical datasets where hierarchical models and intepretability are of paramount importance. 

\section{Acknowledgments}

Research reported in this publication was supported by the National Institute Of Biomedical Imaging And Bioengineering of the National Institutes of Health under Award Number K08EB026500 and by the National Institute of Allergy and Infectious Diseases under Award Number 5R01AI074345-09. The content is solely the responsibility of the authors and does not necessarily represent the official views of the National Institutes of Health.Finally, we would also like to thank Dr Charles McCullow for initially suggesting to investigate this topic. 

\bibliographystyle{tfs}
\bibliography{references}

\end{document}